\documentclass{article}
\pdfpagewidth=8.5in
\pdfpageheight=11in
\usepackage{ijcai19}

\usepackage{soul}
\usepackage{url}
\usepackage[hidelinks]{hyperref}
\usepackage[utf8]{inputenc}
\usepackage[small]{caption}
\usepackage{booktabs}
\urlstyle{same}

\usepackage{algorithm}
\usepackage{algorithmicx}
\usepackage[noend]{algpseudocode}
\usepackage{amsmath}
\usepackage{amssymb}
\usepackage{amsthm}
\usepackage{bbm}
\usepackage{bm}
\usepackage{color}
\usepackage{dirtytalk}
\usepackage{dsfont}
\usepackage{enumerate}
\usepackage{graphicx}
\usepackage{mathtools}
\usepackage{subfigure}
\usepackage{times}

\usepackage[usenames,dvipsnames]{xcolor}
\usepackage[capitalize,noabbrev]{cleveref}

\newcommand{\commentout}[1]{}
\newcommand{\junk}[1]{}
\newcommand{\citet}[1]{\citeauthor{#1}~\shortcite{#1}}

\newtheorem{theorem}{Theorem}

\newtheorem{lemma}{Lemma}

\newcommand{\cA}{\mathcal{A}}

\newcommand{\cH}{\mathcal{H}}
\newcommand{\eps}{\varepsilon}

\newcommand{\realset}{\mathbb{R}}

\newcommand{\E}[1]{\mathbb{E} \left[#1\right]}
\newcommand{\condE}[2]{\mathbb{E} \left[#1 \,\middle|\, #2\right]}

\newcommand{\prob}[1]{\mathbb{P} \left(#1\right)}
\newcommand{\condprob}[2]{\mathbb{P} \left(#1 \,\middle|\, #2\right)}

\newcommand{\var}[1]{\mathrm{var} \left[#1\right]}

\newcommand{\abs}[1]{\left|#1\right|}
\newcommand{\ceils}[1]{\left\lceil#1\right\rceil}
\newcommand*\dif{\mathop{}\!\mathrm{d}}
\newcommand{\floors}[1]{\left\lfloor#1\right\rfloor}
\newcommand{\I}[1]{\mathds{1} \! \left\{#1\right\}}

\newcommand{\set}[1]{\left\{#1\right\}}

\DeclareMathOperator*{\argmax}{arg\,max\,}
\DeclareMathOperator*{\argmin}{arg\,min\,}

\mathchardef\mhyphen="2D

\newcommand{\fpl}{{\tt FPL}}
\newcommand{\giro}{{\tt Giro}}

\newcommand{\klucb}{{\tt KL\mhyphen UCB}}

\newcommand{\phe}{{\tt PHE}}

\newcommand{\ts}{{\tt TS}}
\newcommand{\ucb}{{\tt UCB1}}

\title{Perturbed-History Exploration in Stochastic Multi-Armed Bandits}

\author{
Branislav Kveton$^1$ \and
Csaba Szepesv\'ari$^{2, 3}$ \and
Mohammad Ghavamzadeh$^4$ \And
Craig Boutilier$^1$ \\
\affiliations
$^1$Google Research \\
$^2$DeepMind \\
$^3$ University of Alberta \\
$^4$Facebook AI Research
\emails
\{bkveton, cboutilier\}@google.com,
szepesva@cs.ualberta.ca,
mgh@fb.com
}

\begin{document}

\maketitle

\begin{abstract}
We propose an online algorithm for cumulative regret minimization in a stochastic multi-armed bandit. The algorithm adds $O(t)$ i.i.d.\ \emph{pseudo-rewards} to its history in round $t$ and then pulls the arm with the highest average reward in its perturbed history. Therefore, we call it \emph{perturbed-history exploration} ($\phe$). The pseudo-rewards are carefully designed to offset potentially underestimated mean rewards of arms with a high probability. We derive near-optimal gap-dependent and gap-free bounds on the $n$-round regret of $\phe$. The key step in our analysis is a novel argument that shows that randomized Bernoulli rewards lead to optimism. Finally, we empirically evaluate $\phe$ and show that it is competitive with state-of-the-art baselines.
\end{abstract}


\section{Introduction}
\label{sec:introduction}

A \emph{multi-armed bandit} \cite{lai85asymptotically,auer02finitetime,lattimore19bandit} is an online learning problem where actions of the \emph{learning agent} are represented by \emph{arms}. After the arm is \emph{pulled}, the agent receives its \emph{stochastic reward}. The objective of the agent is to maximize its expected cumulative reward. The agent does not know the mean rewards of the arms in advance and faces the so-called \emph{exploration-exploitation dilemma}: \emph{explore}, and learn more about the arm; or \emph{exploit}, and pull the arm with the highest average reward thus far. The \emph{arm} may be a treatment in a clinical trial and its \emph{reward} is the outcome of that treatment on some patient population.

\emph{Thompson sampling (TS)} \cite{thompson33likelihood,russo18tutorial} and \emph{optimism in the face of uncertainty (OFU)} \cite{auer02finitetime,dani08stochastic,abbasi-yadkori11improved} are the most celebrated and studied exploration strategies in stochastic multi-armed bandits. These strategies are near optimal in multi-armed \cite{garivier11klucb,agrawal13further} and linear \cite{abbasi-yadkori11improved,agrawal13thompson} bandits. However, they typically do not generalize easily to complex problems. For instance, in generalized linear bandits \cite{filippi10parametric}, we only know how to construct \emph{approximate} high-probability confidence sets and posterior distributions. These approximations affect the statistical efficiency of bandit algorithms \cite{filippi10parametric,zhang16online,abeille17linear,jun17scalable,li17provably}. In online learning to rank \cite{radlinski08learning}, we only have statistically efficient algorithms for simple user interaction models, such as the cascade model \cite{kveton15cascading,katariya16dcm}. If the model was a general graphical model with latent variables \cite{chapelle09dynamic}, we would not know how to design a bandit algorithm with regret guarantees. In general, efficient approximations to high-probability confidence sets and posterior distributions are hard to design \cite{gopalan14thompson,kawale15efficient,lu17ensemble,riquelme18deep,lipton18bbq,liu18customized}.

In this work, we propose a novel exploration strategy that is conceptually straightforward and has the potential to easily generalize to complex problems. In round $t$, the learning agent adds $O(t)$ \emph{i.i.d.\ pseudo-rewards} to its history and treats them as if they were generated by actual arm pulls. Then the agent pulls the arm with the highest average reward in this \emph{perturbed history} and observes the reward of the pulled arm. The pseudo-rewards are drawn from the same family of distributions as actual rewards, but generate \emph{maximum variance} randomized data.

Our algorithm, \emph{perturbed-history exploration} ($\phe$), is inherently \emph{optimistic}. To see this, note that the lack of \say{optimism} regarding arm $i$ in round $t$, that its estimated mean reward is below the actual mean, is due to a specific history of past $O(t)$ rewards. These rewards are independent noisy realizations of the mean reward of arm $i$. Therefore, the lack of optimism can be offset by adding $O(t)$ i.i.d.\ pseudo-rewards to the history of arm $i$, so that the estimated mean reward of arm $i$ in its perturbed history is above the mean with a high probability. This design is conceptually simple and appealing, because maximum variance rewards can be easily generated for any reward generalization model.

We make the following contributions in this paper. First, we propose $\phe$, a multi-armed bandit algorithm where the mean rewards of arms are estimated using a mixture of actual rewards and i.i.d.\ pseudo-rewards. Second, we analyze $\phe$ in a $K$-armed bandit with $[0, 1]$ rewards, and prove both $O(K \Delta^{-1} \log n)$ and $O(\sqrt{K n \log n})$ bounds on its $n$-round regret, where $\Delta$ is the minimum gap between the mean rewards of the optimal and suboptimal arms. The key to our analysis is a novel argument that shows that randomized Bernoulli rewards lead to optimism. Finally, we empirically compare $\phe$ to several baselines and show that it is competitive with the best of them.


\section{Setting}
\label{sec:setting}

We use the following notation. The set $\set{1, \dots, n}$ is denoted by $[n]$. We define $\mathrm{Ber}(x; p) = p^x (1 - p)^{1 - x}$ and let $\mathrm{Ber}(p)$ be the corresponding Bernoulli distribution. We also define $B(x; n, p) = \binom{n}{x} p^x (1 - p)^{n - x}$ and let $B(n, p)$ be the corresponding binomial distribution. For any event $E$, $\I{E} = 1$ if and only if event $E$ occurs, and is zero otherwise.

We study the problem of cumulative regret minimization in a stochastic multi-armed bandit. Formally, a \emph{stochastic multi-armed bandit} \cite{lai85asymptotically,auer02finitetime,lattimore19bandit} is an online learning problem where the learning agent sequentially pulls $K$ arms in $n$ rounds. In round $t \in [n]$, the agent pulls arm $I_t \in [K]$ and receives its reward. The reward of arm $i \in [K]$ in round $t$, $Y_{i, t}$, is drawn i.i.d.\ from a distribution of arm $i$, $P_i$, with mean $\mu_i$ and support $[0, 1]$. The goal of the agent is to maximize its expected cumulative reward in $n$ rounds. The agent does not know the mean rewards of the arms in advance and learns them by pulling the arms.

Without loss of generality, we assume that the first arm is \emph{optimal}, that is $\mu_1 > \max_{i > 1} \mu_i$. Let $\Delta_i = \mu_1 - \mu_i$ denote the \emph{gap} of arm $i$. Maximization of the expected cumulative reward in $n$ rounds is equivalent to minimizing the \emph{expected $n$-round regret}, which we define as
\begin{align*}
  R(n)
  = \sum_{i = 2}^K \Delta_i \E{\sum_{t = 1}^n \I{I_t = i}}\,.
\end{align*}


\section{Perturbed-History Exploration}
\label{sec:phe}

Our new algorithm, \emph{perturbed-history exploration} ($\phe$), is presented in \cref{alg:phe}. $\phe$ pulls the arm with the highest average reward in its perturbed history, which is estimated as follows. Let $T_{i, t} = \sum_{\ell = 1}^t \I{I_\ell = i}$ denote the number of pulls of arm $i$ in the first $t$ rounds and $s = T_{i, t - 1}$. Then the estimated reward of arm $i$ in round $t$, $\hat{\mu}_{i, t}$, is the average of its past $s$ \emph{rewards} and $a s$ i.i.d.\ \emph{pseudo-rewards} $(Z_\ell)_{\ell = 1}^{a s}$, for some tunable integer $a > 0$. In line \ref{alg:phe:value}, $\hat{\mu}_{i, t}$ is computed from the sum of the rewards of arm $i$ after $s$ pulls, $V_{i, s}$, and the sum of its pseudo-rewards, $U_{i, s}$. After the arm is pulled, the cumulative reward of that arm is updated with its reward in round $t$ (line \ref{alg:phe:update}). All arms are initially pulled once (line \ref{alg:phe:initialization}).

$\phe$ can be implemented computationally efficiently, such that its computational cost in round $t$ does not depend on $t$. The key observation is that the sum of $a s$ Bernoulli random variables with mean $1 / 2$ is a sample from a binomial distribution with mean $a s / 2$. Therefore, $U_{i, s} \sim B(a s, 1 / 2)$.

The \emph{perturbation scale $a$} is the only tunable parameter of $\phe$ (line \ref{alg:phe:perturbation scale}), which dictates the number of pseudo-rewards that are added to the perturbed history. Therefore, $a$ controls the trade-off between exploration and exploitation. In particular, higher values of $a$ lead to more exploration. We argue informally below that any $a > 1$ suffices for sublinear regret. We prove in \cref{sec:analysis} that any $a > 2$ guarantees it.

\begin{algorithm}[t]
  \caption{Perturbed-history exploration in a multi-armed bandit with $[0, 1]$ rewards}
  \label{alg:phe}
  \begin{algorithmic}[1]
    \State \textbf{Inputs}: Perturbation scale $a$
    \label{alg:phe:perturbation scale}
    \Statex
    \For{$i = 1, \dots, K$}
    \Comment{Initialization}
      \State $T_{i, 0} \gets 0, \, V_{i, 0} \gets 0$
    \EndFor
    \For{$t = 1, \dots, n$}
      \For{$i = 1, \dots, K$}
      \Comment{Estimate mean arm rewards}
        \If{$T_{i, t - 1} > 0$}
          \State $s \gets T_{i, t - 1}$
          \State $\displaystyle U_{i, s} \gets \sum_{\ell = 1}^{a s} Z_\ell$, where
          $(Z_\ell)_{\ell = 1}^{a s} \sim \mathrm{Ber}(1 / 2)$
          \State $\displaystyle \hat{\mu}_{i, t} \gets
          \frac{V_{i, s} + U_{i, s}}{(a + 1) s}$
          \label{alg:phe:value}
        \Else
          \State $\hat{\mu}_{i, t} \gets + \infty$
          \label{alg:phe:initialization}
        \EndIf
      \EndFor
      \State $I_t \gets \argmax_{i \in [K]} \hat{\mu}_{i, t}$
      \Comment{Pulled arm}
      \State Pull arm $I_t$ and get reward $Y_{I_t, t}$
      \Statex
      \For{$i = 1, \dots, K$}
      \Comment{Update statistics}
        \If{$i = I_t$}
          \State $T_{i, t} \gets T_{i, t - 1} + 1$
          \State $V_{i, T_{i, t}} \gets V_{i, T_{i, t - 1}} + Y_{i, t}$
          \label{alg:phe:update}
        \Else
          \State $T_{i, t} \gets T_{i, t - 1}$
        \EndIf
      \EndFor
    \EndFor
  \end{algorithmic}
\end{algorithm}

Now we examine how exploration emerges within our algorithm. Fix arm $i$ and the number of its pulls $s$. Let $V_{i, s}$ be the cumulative reward of arm $i$ after $s$ pulls. Let $(Z_\ell)_{\ell = 1}^{a s} \sim \mathrm{Ber}(1 / 2)$ be $a s$ i.i.d.\ pseudo-rewards and $U_{i, s} = \sum_{\ell = 1}^{a s} Z_\ell$ denote their sum. Then the mean reward of arm $i$ (line \ref{alg:phe:value}) is estimated as
\begin{align}
  \hat{\mu}
  = \frac{V_{i, s} + U_{i, s}}{(a + 1) s}\,.
  \label{eq:phe estimator}
\end{align}
This estimator has two key properties that allow us to bound the regret of $\phe$ in \cref{sec:analysis}. First, it \emph{concentrates} at the scaled and shifted mean reward of arm $i$. More precisely, let $\bar{U}_{i, s} = \E{U_{i, s}}$ and $\bar{V}_{i, s} = \E{V_{i, s}}$. Then we have
\begin{align}
  \E{\hat{\mu}}
  & = \frac{\bar{V}_{i, s} + \bar{U}_{i, s}}{(a + 1) s}
  = \frac{\mu_i + a / 2}{a + 1}\,,
  \label{eq:perturbed mean} \\
  \var{\hat{\mu}}
  & \leq \frac{\sigma_{\max}^2}{(a + 1) s}\,,
  \label{eq:perturbed variance}
\end{align}
where $\sigma_{\max}^2$ is the maximum variance of any random variable on $[0, 1]$. By Popoviciu's inequality on variances \cite{popoviciu35variance}, we have $\sigma_{\max}^2 = 1 / 4$, which is precisely the variance of $Z \sim \mathrm{Ber}(1 / 2)$.

Second, $\hat{\mu}$ is sufficiently \emph{optimistic} in the following sense. Let $E = \set{\bar{V}_{i, s} / s - V_{i, s} / s = \eps}$ be the event that the estimated mean reward of arm $i$ is below the mean by $\eps > 0$. We say that $\hat{\mu}$ is \emph{optimistic} if
\begin{align}
  \condprob{\frac{V_{i, s} + U_{i, s}}{(a + 1) s} \geq
  \frac{\bar{V}_{i, s} + \bar{U}_{i, s}}{(a + 1) s}}{E}
  > \prob{E}
  \label{eq:phe informal optimism}
\end{align}
for any $\eps > 0$ such that $\prob{E} > 0$. That is, for any deviation $\eps > 0$, the conditional probability that the randomized mean reward $\hat{\mu}$ is at least as high as $\E{\hat{\mu}}$ is higher than the probability of that deviation. Under this condition, $\phe$ explores enough and can escape potentially harmful deviations.

Now we argue informally that \eqref{eq:phe informal optimism} holds for $a > 1$ in $\phe$. Fix any $\eps > 0$. First, note that
\begin{align*}
  \prob{E}
  = \prob{\frac{\bar{V}_{i, s}}{s} - \frac{V_{i, s}}{s} = \eps}
  \leq \prob{\frac{\bar{V}_{i, s}}{s} - \frac{V_{i, s}}{s} \geq \eps}
\end{align*}
and
\begin{align*}
  & \condprob{\frac{V_{i, s} + U_{i, s}}{(a + 1) s} \geq
  \frac{\bar{V}_{i, s} + \bar{U}_{i, s}}{(a + 1) s}}{E} \\
  & \quad = \condprob{\frac{V_{i, s} + U_{i, s}}{(a + 1) s} -
  \frac{V_{i, s} + \bar{U}_{i, s}}{(a + 1) s} \geq \frac{\eps}{a + 1}}{E} \\
  & \quad = \prob{\frac{U_{i, s}}{s} - \frac{\bar{U}_{i, s}}{s} \geq \eps}\,.
\end{align*}
The last equality holds because $U_{i, s} - \bar{U}_{i, s}$ is independent of past rewards. Based on the above two inequalities, \eqref{eq:phe informal optimism} holds when
\begin{align}
  \prob{\frac{U_{i, s}}{s} - \frac{\bar{U}_{i, s}}{s} \geq \eps}
  > \prob{\frac{\bar{V}_{i, s}}{s} - \frac{V_{i, s}}{s} \geq \eps}\,.
  \label{eq:phe tail optimism}
\end{align}
Finally, if both $V_{i, s} / s$ and $U_{i, s} / s$ were normally distributed, \eqref{eq:phe tail optimism} would hold if the variance of $V_{i, s} / s$ was lower than that of $U_{i, s} / s$. This is indeed true, since
\begin{align*}
  \var{V_{i, s} / s}
  \leq \sigma_{\max}^2 / s\,, \quad
  \var{U_{i, s} / s}
  = a \sigma_{\max}^2 / s\,;
\end{align*}
and $a > 1$ from our assumption. This concludes our informal argument. We evaluate $\phe$ with $a > 1$ in \cref{sec:experiments}.


\section{Analysis}
\label{sec:analysis}

$\phe$ is an instance of general randomized exploration in Section 3 of \citet{kveton19garbage}. So, the regret of $\phe$ can be bounded using their Theorem 1, which we restate below.

\begin{theorem}
\label{thm:gre regret bound} For any $(\tau_i)_{i = 2}^K \in \realset^{K - 1}$, the expected $n$-round regret of Algorithm 1 in \citet{kveton19garbage} can be bounded from above as $R(n) \leq \sum_{i = 2}^K \Delta_i (a_i + b_i)$, where
\begin{align*}
  a_i
  & = \sum_{s = 0}^{n - 1} \E{\min \set{1 / Q_{1, s}(\tau_i) - 1, n}}\,, \\
  b_i
  & = \sum_{s = 0}^{n - 1} \prob{Q_{i, s}(\tau_i) > 1 / n} + 1\,.
\end{align*}
\end{theorem}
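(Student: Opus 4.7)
The plan is to start from the standard regret decomposition $R(n) = \sum_{i=2}^K \Delta_i \E{T_{i,n}}$, where $T_{i,n}$ is the number of pulls of arm $i$, and then bound $\E{T_{i,n}} \leq a_i + b_i$ separately for each suboptimal arm by introducing a free pivot threshold $\tau_i \in \realset$ that sits between $\hat{\mu}_{1,t}$ and $\hat{\mu}_{i,t}$ on a typical round. The pivot is the standard device for decoupling the tail behavior of the optimal and suboptimal perturbed means.

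First I would note that whenever $I_t = i$, the greedy selection rule forces $\hat{\mu}_{i,t} \geq \hat{\mu}_{1,t}$, so at least one of the following holds in round $t$: either (A) $\hat{\mu}_{1,t} \leq \tau_i$ (the optimal arm's perturbed mean falls below the pivot), or (B) $\hat{\mu}_{i,t} > \tau_i$ (arm $i$'s perturbed mean exceeds the pivot). This partitions the pulls into a contribution from \emph{underestimated} optimal arm plays and a contribution from \emph{overestimated} suboptimal arm plays, yielding $\E{T_{i,n}} \leq \E{\sum_t \I{A_t}} + \E{\sum_t \I{B_t,\, I_t = i}}$.

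For the $b_i$ term I would condition on the history up to round $t$ and observe that, given arm $i$ has been pulled $s = T_{i,t-1}$ times, $\prob{B_t} = Q_{i,s}(\tau_i)$. Summing by the distinct values of $s$ as arm $i$ accrues pulls, one splits on whether $Q_{i,s}(\tau_i)$ exceeds $1/n$ or not: the low-probability range contributes at most $1$ in total (since there are at most $n$ rounds), and the high-probability range contributes at most $\prob{Q_{i,s}(\tau_i) > 1/n}$ for each $s$, exactly matching the definition of $b_i$.

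The hard step, and the main obstacle, is the $a_i$ term. The idea is that while the optimal arm has been pulled exactly $s$ times, its perturbed mean in each subsequent round is an i.i.d.\ resample (conditional on the $s$ observed rewards of arm $1$), so event $A_t$ persists as a sequence of Bernoulli failures with per-round success probability $Q_{1,s}(\tau_i)$ until arm $1$ is finally selected again. The expected length of this geometric-like run of failures is $1/Q_{1,s}(\tau_i) - 1$, clipped at $n$ to handle the horizon and the degenerate case $Q_{1,s}(\tau_i) = 0$. Summing over $s = 0, \dots, n-1$ recovers $a_i$. The delicate part is justifying the conditional independence: one must carefully argue that, given $T_{1,t-1} = s$ and the past rewards of arm $1$, the pseudo-rewards used in rounds where arm $1$'s pull count has stayed at $s$ are fresh and independent of the history that led to the current count, so the geometric accounting goes through. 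Combining the bounds on $a_i$ and $b_i$ and multiplying by $\Delta_i$ gives the stated regret bound.
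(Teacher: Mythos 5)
First, note that the paper does not actually prove this statement: it is imported verbatim from Theorem~1 of \citet{kveton19garbage}, so your attempt can only be judged against the standard proof given there. Your overall architecture --- the decomposition $R(n)=\sum_{i=2}^K \Delta_i\E{T_{i,n}}$, a pivot $\tau_i$ separating the two perturbed means, and accounting per pull count $s$ --- is the right one, and your treatment of the $b_i$ term (at most one pull per value of $s$ on the high-probability branch, at most $n\cdot(1/n)=1$ in total on the low-probability branch) is essentially the correct argument.

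The genuine gap is in the $a_i$ term. As written, your first term is $\E{\sum_t \I{A_t}}$ with $A_t=\set{\hat\mu_{1,t}\le\tau_i}$ and the indicator $\I{I_t=i}$ dropped; this quantity is \emph{not} bounded by $a_i$ (if some third arm dominates forever, arm $1$'s count freezes at some $s$ and $A_t$ recurs with probability $1-Q_{1,s}(\tau_i)$ in each of up to $n$ rounds, giving order $(1-Q_{1,s}(\tau_i))\,n$, which can far exceed $1/Q_{1,s}(\tau_i)-1$). More fundamentally, the \say{geometric run of failures} picture does not close on its own: a \say{success} $\hat\mu_{1,t}>\tau_i$ does not terminate the run, because another arm's perturbed mean may still exceed $\hat\mu_{1,t}$, so arm $1$ need not be pulled and its count need not advance. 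The missing ingredient is the Agrawal--Goyal-type comparison inequality. Take the conditioning event to be $\set{\hat\mu_{i,t}\le\tau_i}$ (which, on $\set{I_t=i}$, implies your $A_t$); then on $\set{I_t=i,\,\hat\mu_{i,t}\le\tau_i}$ \emph{every} arm $j\ne 1$ satisfies $\hat\mu_{j,t}\le\tau_i$, and since the perturbations are independent across arms given the past, the containment $\set{\hat\mu_{j,t}\le\tau_i \ \forall j\ne 1}\cap\set{\hat\mu_{1,t}>\tau_i}\subseteq\set{I_t=1}$ yields
\begin{align*}
  \condprob{I_t=i,\ \hat\mu_{i,t}\le\tau_i}{\cF_{t-1}}
  \le \frac{1-Q_{1,s}(\tau_i)}{Q_{1,s}(\tau_i)}\,\condprob{I_t=1}{\cF_{t-1}}\,,
\end{align*}
where $s=T_{1,t-1}$ and $\cF_{t-1}$ is the history. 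Summing over the rounds with $T_{1,t-1}=s$, using that arm $1$ is pulled at most once while its count equals $s$, gives $\E{1/Q_{1,s}(\tau_i)-1}$ per value of $s$, and the trivial bound of $n$ rounds supplies the clipping at $n$. The \say{delicate part} you flag (freshness of the pseudo-rewards across rounds) is real but secondary; it is this per-round comparison, resting on the cross-arm factorization, that actually makes the geometric accounting valid.
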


\noindent For any arm $i$ and the number of its pulls $s \in [n] \cup \set{0}$,
\begin{align*}
  Q_{i, s}(\tau)
  = \condprob{\hat{\mu} \geq \tau}{\hat{\mu} \sim p(\cH_{i, s}), \, \cH_{i, s}}
\end{align*}
is the tail probability that the estimated mean reward of arm $i$, $\hat{\mu}$, is at least $\tau$ conditioned on the history of the arm after $s$ pulls, $\cH_{i, s}$; where $p$ is the sampling distribution of $\hat{\mu}$ and $\tau$ is a tunable parameter. In $\phe$, the history $\cH_{i, s}$ is $V_{i, s}$ and $\hat{\mu}$ is defined in \eqref{eq:phe estimator}. Following \citet{kveton19garbage}, we set $\tau_i$ in \cref{thm:gre regret bound} to the average of the scaled and shifted mean rewards of arms $1$ and $i$,
\begin{align*}
  \tau_i
  = \frac{\mu_i + a / 2}{a + 1} + \frac{\Delta_i}{2 (a + 1)}\,,
\end{align*}
which are defined in \eqref{eq:perturbed mean}. This setting leads to the following gap-dependent regret bound.

\begin{theorem}
\label{thm:phe regret bound} For any $a > 2$, the expected $n$-round regret of $\phe$ is bounded as
\begin{align*}
  R(n)
  \leq \sum_{i = 2}^K \Delta_i
  \bigg(\underbrace{\frac{16 a c}{\Delta_i^2} \log n + 2}_
  {\text{\emph{$a_i$ in \cref{thm:gre regret bound}}}} +
  \underbrace{\frac{8 a}{\Delta_i^2} \log n + 3}_
  {\text{\emph{$b_i$ in \cref{thm:gre regret bound}}}}\bigg)\,,
\end{align*}
where
\begin{align}
  c
  = \frac{e^2 \sqrt{2 a}}{\sqrt{\pi}} \exp\left[\frac{16}{a - 2}\right]
  \left(1 + \sqrt{\frac{\pi a}{8 (a - 2)}}\right)\,.
  \label{eq:c}
\end{align}
\end{theorem}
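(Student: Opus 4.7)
The plan is to apply \cref{thm:gre regret bound} with the stated threshold $\tau_i$ and control $a_i$ and $b_i$ separately. Using \eqref{eq:perturbed mean}, one checks that $\tau_i$ sits exactly halfway between the rescaled means $\E{\hat\mu_1}$ and $\E{\hat\mu_i}$, each at distance $\Delta_i/(2(a+1))$; this symmetry is what yields matching $\Delta_i^{-2}\log n$ scaling in both terms.

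For $b_i$, I would condition on $V_{i,s}$ and apply Hoeffding to $U_{i,s} \sim B(as,1/2)$. Substituting $\tau_i$, one sees that $Q_{i,s}(\tau_i) = \condprob{U_{i,s} - as/2 \geq s\Delta_i/2 - (V_{i,s} - \bar V_{i,s})}{V_{i,s}}$, so $Q_{i,s}(\tau_i) > 1/n$ forces $V_{i,s} - \bar V_{i,s} > s\Delta_i/2 - \sqrt{(as/2)\log n}$. Once $s \geq 8a\Delta_i^{-2}\log n$, the right-hand side exceeds $s\Delta_i/4$, and a second Hoeffding on the true rewards bounds the probability of that upward deviation by $\exp(-s\Delta_i^2/8)$. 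Summing $1$ per round below the threshold and the geometric tail above it yields $b_i \leq 8a\Delta_i^{-2}\log n + 3$.

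For $a_i$, I would again condition on $V_{1,s}$ and write $\delta = (\bar V_{1,s}-V_{1,s})/s$. From $(a+1)\tau_i = \mu_1 - \Delta_i/2 + a/2$ one obtains $Q_{1,s}(\tau_i) = \prob{U_{1,s} \geq as/2 + s(\delta - \Delta_i/2)}$. On the favorable event $\delta \leq \Delta_i/2$ the required threshold is at most the mean of $U_{1,s}$, so a Chernoff bound in the opposite direction gives $Q_{1,s}(\tau_i) \geq 1/2$, and $1/Q_{1,s}(\tau_i) - 1$ is bounded by a constant. On the unfavorable event $\delta > \Delta_i/2$, I would invoke a Stirling-type anti-concentration bound on the upper tail of $B(as,1/2)$ to write $1/Q_{1,s}(\tau_i) \lesssim \sqrt{as}\exp(c_1 s(\delta-\Delta_i/2)^2/a)$ with an explicit constant $c_1$, and then multiply by the Hoeffding density bound $\exp(-2s\delta^2)$ on the unlucky deviation and integrate.

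The principal obstacle is this integration. Expanding the combined exponent $c_1 s(\delta-\Delta_i/2)^2/a - 2s\delta^2$ as a quadratic in $\delta$ produces a leading coefficient that is negative only when $a > 2$, precisely matching the hypothesis of the theorem. Completing the square shifts the maximum to $\delta^{*} \propto \Delta_i/(a-2)$ and leaves a residual constant of the form $\exp[16/(a-2)]$, while integrating the resulting Gaussian over $\delta \geq \Delta_i/2$ yields the normalization $\sqrt{2a/\pi}$ together with an error-function contribution $1 + \sqrt{\pi a/(8(a-2))}$; the remaining $e^2$ absorbs a Stirling-remainder factor in the anti-concentration bound. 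Assembling these pieces gives $c$ as in \eqref{eq:c} and a per-round bound $\E{\min\{1/Q_{1,s}(\tau_i)-1,n\}} \leq 1 + c\cdot(\text{something exponentially small in }s\Delta_i^2)$. Splitting the sum at $s^{*} = 16ac\Delta_i^{-2}\log n$ — using the trivial bound $\leq 1$ for $s<s^{*}$ and the decaying bound for $s \geq s^{*}$ — delivers $a_i \leq 16ac\Delta_i^{-2}\log n + 2$, and combining with the $b_i$ bound produces the stated regret.
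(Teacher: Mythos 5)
Your overall route is the paper's: the same decomposition via \cref{thm:gre regret bound} with the midpoint threshold $\tau_i$, the same Hoeffding-versus-Hoeffding argument for $b_i$ (your contrapositive phrasing is equivalent to the paper's event $E = \{V_{i,s} - \mu_i s \le \Delta_i s/4\}$), and for $a_i$ the same core mechanism that the paper isolates as \cref{thm:optimism upper bound}: pair a Stirling-type lower bound on the upper tail of the binomial perturbation against the Hoeffding tail of the reward deviation, and observe that the combined exponent is a quadratic in the deviation with negative leading coefficient exactly when $a > 2$, producing $\exp[16/(a-2)]$ from completing the square and $1 + \sqrt{\pi a/(8(a-2))}$ from the resulting Gaussian integral. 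The paper discretizes this computation (partitioning the range of $V_{1,s}$ into width-$\sqrt{s}$ slabs in \cref{lem:history upper bound}) where you integrate; that difference is cosmetic.

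There is, however, a concrete error in your final assembly of $a_i$. On your favorable event $\delta \le \Delta_i/2$ you only get $Q_{1,s}(\tau_i) \ge 1/2$, hence $1/Q_{1,s}(\tau_i) - 1 \le 1$, and this bound does \emph{not} decay with $s$; your claimed per-round bound $1 + c\cdot(\text{exponentially small in } s\Delta_i^2)$ therefore sums to $\Theta(n)$ over the rounds, not to the stated $16ac\Delta_i^{-2}\log n + 2$. Relatedly, the "trivial bound $\le 1$ for $s < s^*$" is unjustified: below the threshold the honest per-round bound is the constant $c$ (which can greatly exceed $1$), obtained by running your integral over \emph{all} deviations $\delta$, not only the unfavorable ones; the split point is then $16a\Delta_i^{-2}\log n$ without the factor $c$, and the $c$ enters as the per-round cost. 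Above the threshold you must also tighten the favorable event to $\delta \le \Delta_i/4$, so that $Q_{1,s}(\tau_i) \ge 1 - \exp[-\Delta_i^2 s/(8a)] \ge (n^2-1)/n^2$ and hence $1/Q_{1,s}(\tau_i) - 1 \le n^{-1}$ there, while the unfavorable event has probability at most $n^{-2}$ and contributes at most $n\cdot n^{-2}$ through the cap. With those two repairs your argument closes and reproduces the paper's bound.
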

\begin{proof}
The proof has two parts. In \cref{sec:b_i upper bound}, we prove an upper bound on $b_i$ in \cref{thm:gre regret bound}. In \cref{sec:a_i upper bound}, we prove an upper bound on $a_i$ in \cref{thm:gre regret bound}. Finally, we add these upper bounds for all arms $i > 0$.
\end{proof}

\noindent A standard reduction yields a gap-free regret bound.

\begin{theorem}
\label{thm:phe gap-free regret bound} For any $a > 2$, the expected $n$-round regret of $\phe$ is bounded as
\begin{align*}
  R(n)
  \leq 4 \sqrt{2 a (2 c + 1) K n \log n} + 5 K\,,
\end{align*}
where $c$ is defined in \cref{thm:phe regret bound}.
\end{theorem}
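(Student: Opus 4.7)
The plan is to apply the standard ``threshold splitting'' reduction from a gap-dependent bound to a gap-free bound, using \cref{thm:phe regret bound} as the only input.

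First, I would rewrite the bound of \cref{thm:phe regret bound} in a cleaner form. Distributing $\Delta_i$ across the parenthesized expression gives
\begin{align*}
  R(n)
  \leq \sum_{i = 2}^K \left(\frac{8 a (2 c + 1)}{\Delta_i} \log n + 5 \Delta_i\right).
\end{align*}
Since $\Delta_i \in [0, 1]$ (rewards lie in $[0,1]$), the second summand contributes at most $5 K$, and the reduction therefore focuses on the first summand.

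Next, I would fix an arbitrary threshold $\Delta > 0$ and split the arms $i > 1$ into two groups: those with small gaps, $\Delta_i < \Delta$, and those with large gaps, $\Delta_i \geq \Delta$. For the small-gap arms, I would use the trivial per-round regret bound: each such arm incurs at most $\Delta$ regret per pull, so collectively they contribute at most $\Delta n$ over the $n$ rounds (this bypasses the $1/\Delta_i$ blow-up when $\Delta_i$ is tiny). For the large-gap arms, I would plug in $\Delta_i \geq \Delta$ into the gap-dependent bound, so that each contributes at most $\frac{8 a (2 c + 1)}{\Delta} \log n$, for a total of at most $\frac{8 a (2 c + 1) K}{\Delta} \log n$. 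Combining these with the additive $5 K$ term gives
\begin{align*}
  R(n)
  \leq \Delta n + \frac{8 a (2 c + 1) K}{\Delta} \log n + 5 K.
\end{align*}

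Finally, I would optimize over $\Delta$. The right-hand side is of the form $\Delta n + B / \Delta + 5K$ with $B = 8 a (2 c + 1) K \log n$, which is minimized at $\Delta^* = \sqrt{B / n}$, yielding $2 \sqrt{B n} + 5 K = 2 \sqrt{8 a (2 c + 1) K n \log n} + 5 K = 4 \sqrt{2 a (2 c + 1) K n \log n} + 5 K$. This matches the claimed bound. There is no real technical obstacle here: the only points requiring a small amount of care are (i) that the choice of $\Delta$ is a free tuning parameter in the analysis, not something the algorithm needs to know, and (ii) that the trivial bound $\Delta n$ is applied across all small-gap arms jointly (rather than $K \Delta n$), since $\sum_{i : \Delta_i < \Delta} \Delta_i \E{\sum_t \I{I_t = i}} \leq \Delta \sum_{i : \Delta_i < \Delta} \E{\sum_t \I{I_t = i}} \leq \Delta n$.
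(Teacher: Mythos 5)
Your proposal is correct and is essentially the paper's own argument: the paper likewise splits the arms by a gap threshold $\varepsilon$, bounds the small-gap arms' regret trivially by $\varepsilon n$, applies the gap-dependent bound with $\Delta_i \geq \varepsilon$ to the rest, and chooses $\varepsilon = \sqrt{8 a (2 c + 1) K \log n / n}$, which is exactly your $\Delta^*$. The algebra, including the simplification $2\sqrt{8 a (2c+1) K n \log n} = 4\sqrt{2 a (2c+1) K n \log n}$, checks out.
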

\begin{proof}
Let $\cA = \set{i \in [K]: \Delta_i \geq \varepsilon}$ be the set of arms whose gaps are at least $\varepsilon > 0$. Then by the same argument as in the proof of \cref{thm:phe regret bound} and from the definition of $\cA$, we have
\begin{align*}
  R(n)
  & \leq \sum_{i \in \cA} \frac{8 a (2 c + 1)}{\Delta_i} \log n +
  \varepsilon n + 5 \abs{\cA} \\
  & \leq \frac{8 a (2 c + 1) K}{\varepsilon} \log n +
  \varepsilon n + 5 K\,.
\end{align*}
Now we choose $\displaystyle \varepsilon = \sqrt{\frac{8 a (2 c + 1) K \log n}{n}}$, which completes the proof.
\end{proof}

\subsection{Discussion}
\label{sec:discussion}

We derive two regret bounds. The gap-dependent bound in \cref{thm:phe regret bound} is $O(K \Delta^{-1} \log n)$, where $\Delta = \min_{i > 1} \Delta_i$ is the minimum gap, $K$ is the number of arms, and $n$ is the number of rounds. This scaling is considered near optimal in stochastic multi-armed bandits. The gap-free bound in \cref{thm:phe gap-free regret bound} is $O(\sqrt{K n \log n})$. This scaling is again near optimal, up to the factor of $\sqrt{\log n}$, in stochastic multi-armed bandits.

A potentially large factor in our bounds is $\exp[16 / (a - 2)]$ in \eqref{eq:c}. It arises in the lower bound on the probability of a binomial tail (\cref{sec:technical lemmas}) and is likely to be loose. Nevertheless, it is constant in $K$, $\Delta$, and $n$; and decreases significantly even for small $a$. For instance, it is only $e^4$ at $a = 6$.

\subsection{Upper Bound on $b_i$ in \cref{thm:gre regret bound}}
\label{sec:b_i upper bound}

Fix arm $i > 1$. Based on our choices of $\cH_{i, s}$, $\hat{\mu}$, and $\tau_i$, we have for $s > 0$ that
\begin{align*}
  Q_{i, s}(\tau_i)
  = \condprob{\frac{V_{i, s} + U_{i, s}}{(a + 1) s} \geq
  \frac{\mu_i + a / 2 + \Delta_i / 2}{a + 1}}{V_{i, s}}\,.
\end{align*}
We set $Q_{i, 0}(\tau_i) = 1$, because of the optimistic initialization in line \ref{alg:phe:initialization} of $\phe$. We abbreviate $Q_{i, s}(\tau_i)$ as $Q_{i, s}$.

Fix the number of pulls $s$ and let $m = 8 a \Delta_i^{-2} \log n$. If $s \leq m$, we bound $\prob{Q_{i, s} > 1 / n}$ trivially by $1$. If $s > m$, we split our proof based on the event that $V_{i, s}$ is not much larger than its expectation,
\begin{align*}
  E
  = \set{V_{i, s} - \mu_i s \leq \Delta_i s / 4}\,.
\end{align*}
On event $E$,
\begin{align*}
  Q_{i, s}
  & = \condprob{V_{i, s} + U_{i, s} - \mu_i s - \frac{a s}{2} \geq
  \frac{\Delta_i s}{2}}{V_{i, s}} \\
  & \leq \condprob{U_{i, s} - \frac{a s}{2} \geq \frac{\Delta_i s}{4}}{V_{i, s}} \\
  & \leq \exp\left[- \frac{\Delta_i^2 s}{8 a}\right]
  \leq n^{-1}\,,
\end{align*}
where the first inequality is by the definition of event $E$, the second is by Hoeffding's inequality, and the last is from $s > m$. On the other hand, event $\bar{E}$ is unlikely because
\begin{align*}
  \prob{\bar{E}}
  \leq \exp\left[- \frac{\Delta_i^2 s}{8}\right]
  \leq \exp\left[- \frac{\Delta_i^2 s}{8 a}\right]
  \leq n^{-1}\,,
\end{align*}
where the first inequality is from Hoeffding's inequality, the second is from $a > 1$, and the last is from $s > m$. Now we apply the last two inequalities and get
\begin{align*}
  \prob{Q_{i, s} > 1 / n}
  = {} & \E{\condprob{Q_{i, s} > 1 / n}{V_{i, s}} \I{E}} + {} \\
  & \E{\condprob{Q_{i, s} > 1 / n}{V_{i, s}} \I{\bar{E}}} \\
  \leq {} & 0 + \prob{\bar{E}}
  \leq n^{-1}\,.
\end{align*}
Finally, we chain our upper bounds for all $s$ and get
\begin{align*}
  b_i
  \leq 1 + \sum_{s = 0}^{\floors{m}} 1 +
  \smashoperator[r]{\sum_{s = \floors{m} + 1}^{n - 1}} n^{-1}
  \leq \frac{8 a}{\Delta_i^2} \log n + 3\,.
\end{align*}
This completes our proof.

\subsection{Upper Bound on $a_i$ in \cref{thm:gre regret bound}}
\label{sec:a_i upper bound}

Fix arm $i > 1$. Based on our choices of $\cH_{1, s}$, $\hat{\mu}$, and $\tau_i$, we have for $s > 0$ that
\begin{align*}
  Q_{1, s}(\tau_i)
  = \condprob{\frac{V_{1, s} + U_{1, s}}{(a + 1) s} \geq
  \frac{\mu_1 + a / 2 - \Delta_i / 2}{a + 1}}{V_{1, s}}\,.
\end{align*}
We set $Q_{1, 0}(\tau_i) = 1$, because of the optimistic initialization in line \ref{alg:phe:initialization} of $\phe$. We abbreviate $Q_{1, s}(\tau_i)$ as $Q_{1, s}$, and define $F_s = 1 / Q_{1, s} - 1$.

Fix the number of pulls $s$ and let $m = 16 a \Delta_i^{-2} \log n$. If $s = 0$, $Q_{1, s} = 1$ and we obtain $\E{\min \set{F_s, n}} = 0$. Now consider the case of $s > 0$. If $s \leq m$, we apply the upper bound in \cref{thm:optimism upper bound} in \cref{sec:technical lemmas} and get
\begin{align*}
  & \E{\min \set{F_s, n}}
  \leq \E{1 / Q_{1, s}} \\
  & \quad \leq \E{1 / \condprob{V_{1, s} + U_{1, s} \geq \mu_1 s + a s / 2}
  {V_{1, s}}}
  \leq c\,,
\end{align*}
where $c$ is defined in \eqref{eq:c}. Note that $a$ in \cref{thm:optimism upper bound} plays the role of $a / 2$ in this claim.

If $s > m$, we split our argument based on the event that $V_{1, s}$ is not much smaller than its expectation,
\begin{align*}
  E
  = \set{\mu_1 s - V_{1, s} \leq \Delta_i s / 4}\,.
\end{align*}
On event $E$,
\begin{align*}
  Q_{1, s}
  & = \condprob{\mu_1 s + \frac{a s}{2} - V_{1, s} - U_{1, s} \leq
  \frac{\Delta_i s}{2}}{V_{1, s}} \\
  & \geq \condprob{\frac{a s}{2} - U_{1, s} \leq \frac{\Delta_i s}{4}}{V_{1, s}} \\
  & = 1 - \condprob{\frac{a s}{2} - U_{1, s} > \frac{\Delta_i s}{4}}{V_{1, s}} \\
  & \geq 1 - \exp\left[- \frac{\Delta_i^2 s}{8 a}\right] 
  \geq \frac{n^2 - 1}{n^2}\,,
\end{align*}
where the first inequality is by the definition of event $E$, the second is by Hoeffding's inequality, and the last is from $s > m$. This lower bound yields
\begin{align*}
  F_s
  = \frac{1}{Q_{1, s}} - 1
  \leq \frac{n^2}{n^2 - 1} - 1
  = \frac{1}{n^2 - 1}
  \leq n^{-1}
\end{align*}
for $n \geq 2$. On the other hand, event $\bar{E}$ is unlikely because
\begin{align*}
  \prob{\bar{E}}
  \leq \exp\left[- \frac{\Delta_i^2 s}{8}\right]
  \leq \exp\left[- \frac{\Delta_i^2 s}{8 a}\right]
  \leq n^{-2}\,,
\end{align*}
where the first inequality is from Hoeffding's inequality, the second is from $a > 1$, and the last is from $s > m$. Now we apply the last two inequalities and get
\begin{align*}
  \E{\min \set{F_s, n}}
  = {} & \E{\condE{\min \set{F_s, n}}{V_{1, s}} \I{E}} + {} \\
  & \E{\condE{\min \set{F_s, n}}{V_{1, s}} \I{\bar{E}}} \\
  \leq {} & n^{-1} \prob{E} + n \, \prob{\bar{E}}
  \leq 2 n^{-1}\,.
\end{align*}
Finally, we chain our upper bounds for all $s$ and get
\begin{align*}
  a_i
  \leq 0 + \sum_{s = 1}^{\floors{m}} c +
  \smashoperator[r]{\sum_{s = \floors{m} + 1}^{n - 1}} 2 n^{-1}
  \leq \frac{16 a c}{\Delta_i^2} \log n + 2\,.
\end{align*}
This completes our proof.


\section{Experiments}
\label{sec:experiments}

\begin{figure*}[t]
  \centering
  \raisebox{-0.85in}{\includegraphics[width=4.675in]{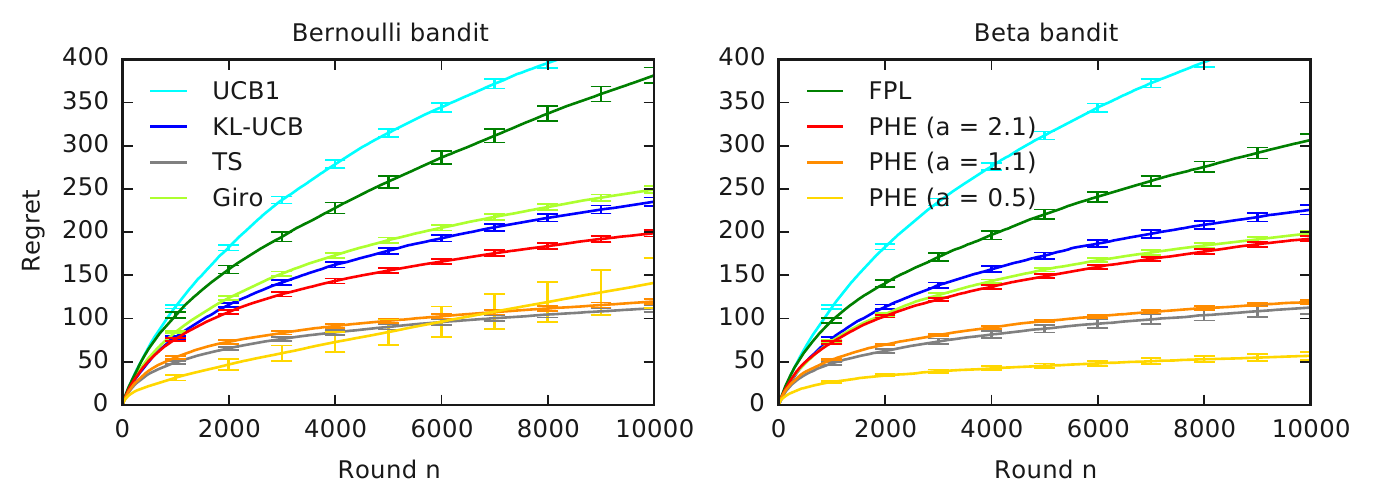}} \quad
  {\small
  \begin{tabular}{r r|r r r} \hline
    \multicolumn{2}{c|}{Model} & \multicolumn{3}{|c}{Run time (seconds)} \\
    $K$ & $n$ & $\ts$ & $\phe$ & $\giro$ \\ \hline
    $5$ & $1$k & $13.9$ & $18.2$ & $48.7$ \\
    $10$ & $1$k & $13.5$ & $17.0$ & $83.1$ \\
    $20$ & $1$k & $14.7$ & $19.1$ & $157.0$ \\
    $5$ & $10$k & $134.7$ & $179.5$ & $843.5$ \\
    $10$ & $10$k & $146.0$ & $180.3$ & $1250.2$ \\
    $20$ & $10$k & $136.3$ & $182.6$ & $1916.5$ \\ \hline
  \end{tabular}
  } \\
  \hspace{0.25in} (a) \hspace{2in} (b) \hspace{2.1in} (c)
  \caption{{\bf a}. Comparison of $\phe$ to multiple baselines in a Bernoulli bandit. {\bf b}. Comparison of $\phe$ to multiple baselines in a beta bandit. {\bf c}. Run times of three randomized algorithms in a beta bandit. All results are averaged over $100$ randomly chosen problems.}
  \label{fig:results}
\end{figure*}

We compare $\phe$ to five baselines: $\ucb$ \cite{auer02finitetime}, $\klucb$ \cite{garivier11klucb}, Bernoulli $\ts$ \cite{agrawal13further} with a $\mathrm{Beta}(1, 1)$ prior, $\giro$ \cite{kveton19garbage}, and $\fpl$ \cite{neu13efficient}. The baselines are chosen for the following reasons. $\klucb$ and $\ts$ are statistically near-optimal in Bernoulli bandits. We implement them with $[0, 1]$ rewards as follows. For any observed reward $Y_{i, t} \in [0, 1]$, we draw $\hat{Y}_{i, t} \sim \mathrm{Ber}(Y_{i, t})$ and then use it instead of $Y_{i, t}$ \cite{agrawal13further}. $\giro$ is chosen because it explores similarly to $\phe$, by adding pseudo-rewards to its history (\cref{sec:related work}). We implement it with $a = 1$, as analyzed in \citet{kveton19garbage}. $\fpl$ is chosen because it perturbs the estimates of mean rewards similarly to $\phe$ (\cref{sec:related work}). We implement it with geometric resampling and exponential noise, as described in \citet{neu13efficient}.

We experiment with three settings of perturbation scales $a$ in $\phe$: $2.1$, $1.1$, and $0.5$. The first value is greater than $2$ and is formally justified in \cref{sec:analysis}. The second value is greater than $1$ and is informally justified in \cref{sec:phe}. The last value is used to illustrate that the regret of $\phe$ can be linear when $\phe$ is not parameterized properly.

To run $\phe$ with a non-integer perturbation scale $a$, we replace $a s$ in $\phe$ with $\ceils{a s}$. The analysis of $\phe$ in \cref{sec:analysis} can be extended to this setting. We also experimented with $a = 1$ and $a = 2$. We do not report these results because they are similar to those at $a = 1.1$ and $a = 2.1$.

\subsection{Comparison to Baselines}
\label{sec:comparison to baselines}

In the first experiment, we evaluate $\phe$ on two classes of the bandit problems in \citet{kveton19garbage}. The first class is a Bernoulli bandit where $P_i = \mathrm{Ber}(\mu_i)$. The second class is a \emph{beta bandit} where $P_i = \mathrm{Beta}(v \mu_i, v (1 - \mu_i))$ and $v = 4$. We experiment with $100$ randomly chosen problems in each class. Each problem has $K = 10$ arms and the mean rewards of these arms are chosen uniformly at random from interval $[0.25, 0.75]$. The horizon is $n = 10000$ rounds.

Our results are reported in Figures \ref{fig:results}a and \ref{fig:results}b. We observe that $\phe$ with $a > 1$ outperforms four of our baselines: $\ucb$, $\klucb$, $\giro$, and $\fpl$. This is unexpected, since the design of $\phe$ is conceptually simple; and neither requires nor uses confidence intervals or posteriors. $\phe$ becomes competitive with $\ts$ at $a = 1.1$. Note that the regret of $\phe$ is linear in the Bernoulli bandit at $a = 0.5$. This shows that our suggestions for setting the perturbation scale $a$ are reasonably tight.

\subsection{Computational Cost}
\label{sec:computational cost}

In the second experiment, we compare the run times of three randomized algorithms: $\ts$, which samples from a beta posterior; $\giro$, which bootstraps from a history with pseudo-rewards; and $\phe$, which samples pseudo-rewards from a binomial distribution. The number of arms is between $5$ and $20$, and the horizon is up to $n = 10000$ rounds.

Our results are reported in \cref{fig:results}c. In all settings, the run time of $\phe$ is comparable to that of $\ts$. The run time of $\giro$ is an order of magnitude higher. The reason is that the computational cost of bootstrapping grows linearly with the number of past observations.


\section{Related Work}
\label{sec:related work}

Our algorithm design bears a similarity to three existing designs, which we discuss in detail below.

$\giro$ is a bandit algorithm where the mean reward of the arm is estimated by its average reward in a \emph{bootstrap sample} of its history with pseudo-rewards \cite{kveton19garbage}. The algorithm has a provably sublinear regret in a Bernoulli bandit. $\phe$ improves over $\giro$ in three respects. First, its design is simpler, because $\phe$ merely adds random pseudo-rewards and does not bootstrap. Second, $\phe$ has a sublinear regret in \emph{any} $K$-armed bandit with $[0, 1]$ rewards. Third, $\phe$ is computationally efficient beyond a Bernoulli bandit. We discuss this in \cref{sec:phe}.

Our work is also closely related to posterior sampling. In particular, let $\mu \sim \mathcal{N}(\mu_0, \sigma^2)$ and $(Y_\ell)_{\ell = 1}^s \sim \mathcal{N}(\mu, \sigma^2)$ be $s$ i.i.d.\ noisy observations of $\mu$. Then the posterior distribution of $\mu$ conditioned on $(Y_\ell)_{\ell = 1}^s$ is
\begin{align}
  \mathcal{N}\left(\frac{\mu_0 + \sum_{\ell = 1}^s Y_\ell}{s + 1}, \,
  \frac{\sigma^2}{s + 1}\right)\,.
  \label{eq:normal posterior}
\end{align}
A sample from this distribution can be also drawn as follows. First, draw $s + 1$ i.i.d.\ samples $(Z_\ell)_{\ell = 0}^s \sim \mathcal{N}(0, \sigma^2)$. Then
\begin{align*}
  \frac{\mu_0 + \sum_{\ell = 1}^s Y_\ell + \sum_{\ell = 0}^s Z_\ell}{s + 1}
\end{align*}
is a sample from \eqref{eq:normal posterior}. Unfortunately, the above equivalence holds only for normal random variables. Therefore, it cannot justify $\phe$ as a form of Thompson sampling. Nevertheless, the scale of the perturbation is similar to \eqref{eq:phe estimator}, which suggests that $\phe$ is sound.

\emph{Follow the perturbed leader (FPL)} \cite{hannan57approximation,kalai05efficient} is an algorithm design where the learning agent pulls the arm with the lowest perturbed cumulative cost. In our notation, $I_t = \argmin_{i \in [K]} \tilde{V}_{i, t - 1} + \tilde{U}_{i, t}$, where $\tilde{V}_{i, t - 1}$ is the cumulative cost of arm $i$ in the first $t - 1$ rounds and $\tilde{U}_{i, t}$ is the perturbation of arm $i$ in round $t$. $\phe$ differs from FPL in three respects. First, $\tilde{U}_{i, t} = O(\sqrt{n})$ in FPL. In $\phe$, the noise in round $t$ \emph{adapts} to the number of arm pulls, because $U_{i, T_{i, t - 1}} = O(T_{i, t - 1})$. Second, FPL has been traditionally studied in the \emph{non-stochastic full-information} setting. In comparison, $\phe$ is designed for the \emph{stochastic bandit} setting. \citet{neu13efficient} extended FPL to the bandit setting using geometric resampling and we compare to their algorithm in \cref{sec:experiments}. Finally, all existing FPL analyses derive \emph{gap-free regret bounds}. We derive a \emph{gap-dependent regret bound}.


\section{Conclusions}
\label{sec:conclusions}

We propose a new online algorithm, $\phe$, for cumulative regret minimization in stochastic multi-armed bandits. The key idea in $\phe$ is to add $O(t)$ i.i.d.\ pseudo-rewards to the history in round $t$ and then pull the arm with the highest average reward in this perturbed history. The pseudo-rewards are drawn from the maximum variance distribution. We derive $O(K \Delta^{-1} \log n)$ and $O(\sqrt{K n \log n})$ bounds on the $n$-round regret of $\phe$, where $K$ is the number of arms and $\Delta$ is the minimum gap between the mean rewards of the optimal and suboptimal arms. This result is unexpected, since the design of $\phe$ is conceptually simple. We empirically compare $\phe$ to several baselines and show that it is competitive with the best of them.

$\phe$ can be easily adapted to any reward distributions with a bounded support. If $Y_{i, t} \in [m, M]$, $Y_{i, t}$ in line \ref{alg:phe:update} of $\phe$ should be replaced with $(Y_{i, t} - m) / (M - m)$.

$\phe$ can be applied to structured problems, such as generalized linear bandits \cite{filippi10parametric}, as follows. Let $x_i$ be the feature vector of arm $i$. Then $((x_{I_\ell}, Y_{I_\ell, \ell}))_{\ell = 1}^{t - 1}$ is the \emph{history} in round $t$ and a natural choice for the \emph{pseudo-history} is $((x_{I_\ell}, Z_{j, \ell}))_{j \in [a], \, \ell \in [t - 1]}$, where $Z_{j, \ell} \sim \mathrm{Ber}(1 / 2)$ are i.i.d.\ random variables. In round $t$, the learning agent fits a reward generalization model to a mixture of both histories and pulls the arm with the highest estimated reward in that model. We leave the analysis and empirical evaluation of this algorithm for future work. The algorithm was analyzed in a linear bandit in \citet{kveton19perturbed2}.

We believe that $\phe$ can be extended to other perturbation schemes. For instance, since $\var{V_{i, s}} \leq s / 4$, it is plausible that any $s$ i.i.d.\ pseudo-rewards with a comparable variance, such as $(Z_\ell)_{\ell = 1}^s \sim \mathcal{N}(0, 1 / 4)$, would lead to optimism. We leave the analyses of such designs for future work.

\appendix


\section{Technical Lemmas}
\label{sec:technical lemmas}

Fix arm $i$ and the number of its pulls $n$. Let $X$ be the cumulative reward of arm $i$ after $n$ pulls and $Y = \sum_{\ell = 1}^{2 a n} Z_\ell$ be the sum of $2 a n$ i.i.d.\ pseudo-rewards $(Z_\ell)_{\ell = 1}^{2 a n} \sim \mathrm{Ber}(1 / 2)$. Note that both $X$ and $Y$ are random variables. Let $\bar{X} = \E{X}$ and $\bar{Y} = \E{Y}$. Our main theorem is stated and proved below.

\begin{theorem}
\label{thm:optimism upper bound} For any $a > 1$,
\begin{align*}
  & \E{1 / \condprob{X + Y \geq \bar{X} + \bar{Y}}{X}} \\
  & \quad \leq \frac{2 e^2 \sqrt{a}}{\sqrt{\pi}} \exp\left[\frac{8}{a - 1}\right]
  \left(1 + \sqrt{\frac{\pi a}{8 (a - 1)}}\right)\,.
\end{align*}
\end{theorem}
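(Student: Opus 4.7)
My plan is to condition on $X$ and study the inner probability $p(X) = \condprob{Y \geq \bar{Y} + (\bar{X} - X)}{X}$, where $Y \sim B(2an, 1/2)$ has mean $\bar{Y} = an$ and variance $an/2$, and $X$ has variance at most $n/4$ with sub-Gaussian tails by Hoeffding's inequality. Let $W = \bar{X} - X$. I split $\E{1/p(X)}$ according to the sign of $W$.

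For the easy case $W \leq 0$, the threshold $\bar{Y} + W$ lies at or below the median $\bar{Y} = an$ of the symmetric binomial $Y$, so $p(X) \geq \prob{Y \geq \bar{Y}} \geq 1/2$. This contributes at most $2$ to the expectation and accounts for the leading $1$ inside the parentheses of the target bound.

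The real work is the case $W > 0$, where I must lower-bound $\prob{Y \geq \bar{Y} + W}$ by something of the form $\frac{c_1}{\sqrt{an}} \exp[-c_2 W^2/(an)]$ using a Stirling-type estimate (or, equivalently, an anti-concentration bound for the symmetric binomial such as Slud's inequality). Combining this with Hoeffding's upper-tail inequality $\prob{W \geq w} \leq \exp[-2w^2/n]$, I will evaluate
\begin{align*}
\E{\frac{1}{p(X)} \I{W > 0}}
\leq \int_0^n c_1^{-1}\sqrt{an}\exp\left[\frac{c_2 w^2}{an}\right] \dif(-\prob{W \geq w}),
\end{align*}
integrating by parts (the boundary terms are harmless) so the exponential in $w$ from the binomial lower bound multiplies the Hoeffding tail $\exp[-2 w^2/n]$. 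The resulting exponent is $-w^2 (2a - c_2)/(an)$, which is negative exactly when $a$ exceeds a threshold determined by $c_2$, and integrates to a Gaussian integral of order $\sqrt{\pi a n /(8(a-1))}$, producing the second term inside the parentheses of the claimed bound together with the $\sqrt{a/\pi}$ prefactor.

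The main obstacle is producing the binomial lower bound with constants sharp enough to give the stated $\exp[8/(a-1)]$ factor and the prefactor $2e^2\sqrt{a/\pi}$: the classical Stirling-based lower bound on $\prob{Y \geq an+w}$ carries an $O(w^3/(an)^2)$ correction in the exponent, which after absorbing into the Gaussian integral yields exactly an $\exp[\mathrm{const}/(a-1)]$ penalty. I expect the cleanest route is to handle the moderate-deviation regime $w \leq an$ by Stirling's bound (with explicit constants accounting for $e^2$) and the remaining range $w > an$ trivially, since there $\prob{W \geq w} = 0$ as $W \in [-\bar{X}, n - \bar{X}] \subseteq [-n, n]$. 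Adding the two contributions from the sign cases, and noting that $a$ in \cref{thm:optimism upper bound} plays the role of $a/2$ of $\phe$ (matching $2an$ pseudo-rewards here to $as$ in the algorithm with $s \leftrightarrow n$, $a \leftrightarrow a/2$), yields the stated inequality.
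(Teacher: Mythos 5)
Your overall architecture matches the paper's: split on the size of the deviation $\bar{X}-X$, control its upper tail with Hoeffding's inequality, lower-bound the binomial upper tail by a Gaussian-type expression, and combine the two so that the exponents nearly cancel, leaving a Gaussian integral of order $\sqrt{\pi a/(8(a-1))}$. (The paper performs the combination by partitioning the range of $X$ into buckets of width $\sqrt{n}$ in \cref{lem:history upper bound} rather than by integration by parts; that difference is cosmetic. Your remark that $a$ here plays the role of $a/2$ in the algorithm is also correct.)

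The genuine gap is the form of your binomial lower bound. You propose $\prob{Y \geq \bar{Y}+w} \geq \frac{c_1}{\sqrt{an}}\exp[-c_2 w^2/(an)]$, which is what a Stirling estimate of a \emph{single term} of the pmf gives. This is too weak by a factor of $\Theta(\sqrt{an})$: at $w$ near $0$ the true tail is at least $1/2$ while your bound is $\Theta(1/\sqrt{an})$, so the contribution of the constant-probability event $\set{0 < \bar{X}-X \leq \sqrt{n}}$ to $\E{1/p(X)}$ is then only bounded by $c_1^{-1}\sqrt{an}$, and no arrangement of the integration by parts removes this (the boundary term at $w=0^+$ is already $\Theta(\sqrt{n})$). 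Your own intermediate quantity $\sqrt{\pi a n/(8(a-1))}$ carries the stray $\sqrt{n}$, whereas the theorem's bound is independent of $n$. The fix --- and the actual content of \cref{lem:perturbation lower bound} --- is to lower-bound the \emph{tail} rather than a single term, by summing the pointwise Stirling bound over the $\floors{\sqrt{n}}$ integers in $[an+\delta,\,an+\delta+\sqrt{n}]$; this recovers the missing $\sqrt{n}$ in the prefactor at the price of replacing $\delta$ by $\delta+\sqrt{n}$ in the exponent, and that shift is precisely what produces the $(i+2)^2$ term and, after completing the square, the $\exp[8/(a-1)]$ factor you were trying to account for. Slud's inequality, which you cite as an equivalent alternative, is \emph{not} equivalent to your displayed bound: it gives $\Phi(-w/\sigma)$ with $\sigma=\sqrt{an/2}$, whose reciprocal is $\Theta\bigl(1+(w/\sigma)e^{w^2/(an)}\bigr)$, and that route could also be made to work with a Komatsu-type lower bound on the normal tail --- but the computation you actually wrote down uses the $1/\sqrt{an}$ prefactor, not this one. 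Until the tail lower bound is stated with an $n$-independent prefactor, the proof does not close.
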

\begin{proof}
Let $W = \E{1 / \condprob{Y \geq \bar{X} - X + \bar{Y}}{X}}$. Note that $W$ can be rewritten as $W = \E{f(X)}$, where
\begin{align*}
  f(X)
  = \left[\smashoperator[r]{\sum_{y = \ceils{\bar{X} - X + a n}}^m}
  B(y; m, 1 / 2)\right]^{-1}
\end{align*}
and $m = 2 a n$. This follows from the definition of $Y$ and that $\bar{Y} = a n$.

Note that $f(X)$ decreases in $X$, as required by \cref{lem:history upper bound}, because the probability of observing at least $\ceils{\bar{X} - X + a n}$ ones increases with $X$ and $f(X)$ is its reciprocal. So we can apply \cref{lem:history upper bound} and get
\begin{align*}
  W
  \leq {} & \sum_{i = 0}^{i_0 - 1} \exp[-2 i^2]
  \left[\smashoperator[r]{\sum_{y = \ceils{a n + (i + 1) \sqrt{n}}}^m}
  B(y; m, 1 / 2)\right]^{-1} + {} \\
  & \exp[-2 i_0^2]
  \left[\smashoperator[r]{\sum_{y = \ceils{a n + \bar{X}}}^m}
  B(y; m, 1 / 2)\right]^{-1}\,,
\end{align*}
where $i_0$ is the smallest integer such that $(i_0 + 1) \sqrt{n} \geq \bar{X}$, as defined in \cref{lem:history upper bound}.

Now we bound the sums in the reciprocals from below using \cref{lem:perturbation lower bound}. For $\delta = (i + 1) \sqrt{n}$,
\begin{align*}
  \smashoperator[r]{\sum_{y = \ceils{a n + (i + 1) \sqrt{n}}}^m} B(y; m, 1 / 2)
  \geq \frac{\sqrt{\pi}}{e^2 \sqrt{a}}
  \exp\left[- \frac{2 (i + 2)^2}{a}\right]\,.
\end{align*}
For $\delta = \bar{X}$,
\begin{align*}
  \smashoperator[r]{\sum_{y = \ceils{a n + \bar{X}}}^m} B(y; m, 1 / 2)
  & \geq \frac{\sqrt{\pi}}{e^2 \sqrt{a}}
  \exp\left[- \frac{2 (\bar{X} + \sqrt{n})^2}{a n}\right] \\
  & \geq \frac{\sqrt{\pi}}{e^2 \sqrt{a}}
  \exp\left[- \frac{2 (i_0 + 2)^2}{a}\right]\,,
\end{align*}
where the last inequality is from the definition of $i_0$. Then we chain the above three inequalities and get
\begin{align*}
  W
  \leq \frac{e^2 \sqrt{a}}{\sqrt{\pi}}
  \sum_{i = 0}^{i_0} \exp\left[- \frac{2 a i^2 - 2 (i + 2)^2}{a}\right]\,.
\end{align*}
Now note that
\begin{align*}
  & 2 a i^2 - 2 (i + 2)^2 \\
  & \quad = 2 (a - 1) i^2 - 8 i - 8 \\
  & \quad = 2 (a - 1) \left(i^2 - \frac{4 i}{a - 1} +
  \frac{4}{(a - 1)^2} - \frac{4}{(a - 1)^2}\right) - 8 \\
  & \quad = 2 (a - 1) \left(i - \frac{2}{a - 1}\right)^2 - \frac{8 a}{a - 1}\,.
\end{align*}
It follows that
\begin{align*}
  W
  & \leq \frac{e^2 \sqrt{a}}{\sqrt{\pi}} \sum_{i = 0}^{i_0}
  \exp\left[- \frac{2 (a - 1)}{a} \left(i - \frac{2}{a - 1}\right)^2 +
  \frac{8}{a - 1}\right] \\
  & \leq \frac{2 e^2 \sqrt{a}}{\sqrt{\pi}} \exp\left[\frac{8}{a - 1}\right]
  \sum_{i = 0}^\infty \exp\left[- \frac{2 (a - 1)}{a} i^2\right] \\
  & \leq \frac{2 e^2 \sqrt{a}}{\sqrt{\pi}} \exp\left[\frac{8}{a - 1}\right] \!
  \left[1 + \!\!\! \smashoperator[r]{\int_{u = 0}^\infty}
  \exp\left[- \frac{2 (a - 1)}{a} u^2\right] \dif u\right] \\
  & = \frac{2 e^2 \sqrt{a}}{\sqrt{\pi}} \exp\left[\frac{8}{a - 1}\right]
  \left(1 + \sqrt{\frac{\pi a}{8 (a - 1)}}\right)\,.
\end{align*}
This concludes our proof.
\end{proof}

\begin{lemma}
\label{lem:history upper bound} Let $f(X)$ be a non-negative decreasing function of random variable $X$ in \cref{thm:optimism upper bound} and $i_0$ be the smallest integer such that $(i_0 + 1) \sqrt{n} \geq \bar{X}$. Then
\begin{align*}
  \E{f(X)}
  \leq {} & \sum_{i = 0}^{i_0 - 1} \exp[-2 i^2] f(\bar{X} - (i + 1) \sqrt{n}) + {} \\
  & \exp[-2 i_0^2] f(0)\,.
\end{align*}
\end{lemma}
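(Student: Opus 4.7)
The plan is to discretize $X$ into a sequence of shells indexed by how far $X$ falls below its mean $\bar{X}$, bound the probability of each shell via Hoeffding's inequality, and exploit monotonicity of $f$ to control $f(X)$ on each shell by its value at the shell's left endpoint.

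First, I would partition the probability space into the events
\begin{align*}
  B_0 & = \set{X \geq \bar{X} - \sqrt{n}}\,, \\
  B_i & = \set{\bar{X} - (i + 1) \sqrt{n} \leq X < \bar{X} - i \sqrt{n}}
  \quad \text{for } 1 \leq i \leq i_0 - 1\,, \\
  B_{i_0} & = \set{X < \bar{X} - i_0 \sqrt{n}}\,.
\end{align*}
By the definition of $i_0$, the last event is contained in $\set{X < 0 + \sqrt{n}}$, but more usefully we only rely on $X \geq 0$ there (since $X$ is a sum of $[0,1]$-valued rewards). These events exhaust the sample space.

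Next, since $X$ is a sum of $n$ i.i.d.\ random variables bounded in $[0, 1]$, Hoeffding's inequality yields
\begin{align*}
  \prob{B_i}
  \leq \prob{\bar{X} - X \geq i \sqrt{n}}
  \leq \exp\!\left[- 2 i^2\right]
\end{align*}
for every $i \geq 1$; for $i = 0$ the trivial bound $\prob{B_0} \leq 1 = \exp[-2 \cdot 0^2]$ suffices, and for $i = i_0$ the same Hoeffding bound applies.

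Finally, since $f$ is non-negative and decreasing, on $B_i$ with $i < i_0$ we have $X \geq \bar{X} - (i + 1) \sqrt{n}$ and therefore $f(X) \leq f(\bar{X} - (i + 1) \sqrt{n})$, while on $B_{i_0}$ we have $X \geq 0$ and therefore $f(X) \leq f(0)$. Combining these pointwise bounds with the tail probabilities and summing,
\begin{align*}
  \E{f(X)}
  = \sum_{i = 0}^{i_0} \E{f(X) \I{B_i}}
  \leq \sum_{i = 0}^{i_0 - 1} \exp[-2 i^2]\, f(\bar{X} - (i + 1) \sqrt{n})
  + \exp[-2 i_0^2]\, f(0)\,,
\end{align*}
which is the claimed inequality. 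There is no real obstacle here; the only subtlety is the special handling of the last shell, where the natural endpoint $\bar{X} - (i_0 + 1) \sqrt{n}$ may be negative and lies outside the support of $X$, forcing the replacement by $f(0)$ using $X \geq 0$.
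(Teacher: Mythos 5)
Your proof is correct and follows essentially the same route as the paper: the same shell decomposition of $X$ by its deviation below $\bar{X}$ in units of $\sqrt{n}$, the same Hoeffding bound $\exp[-2i^2]$ on each shell's probability, and the same use of monotonicity of $f$ (with the last shell handled via $X \geq 0$ and $f(0)$). No meaningful differences.
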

\begin{proof}
Let
\begin{align*}
  \mathcal{P}_i
  =
  \begin{cases}
    (\max \set{\bar{X} - \sqrt{n}, \, 0}, \, n]\,, &
    i = 0\,; \\
    (\max \set{\bar{X} - (i + 1) \sqrt{n}, \, 0}, \, \bar{X} - i \sqrt{n}]\,, &
    i > 0\,;
  \end{cases}
\end{align*}
for $i \in [i_0] \cup \set{0}$. Then $\set{\mathcal{P}_i}_{i = 0}^{i_0}$ is a partition of $[0, n]$. Based on this observation,
\begin{align*}
  \E{f(X)}
  = {} & \sum_{i = 0}^{i_0} \E{\I{X \in \mathcal{P}_i} f(X)} \\
  \leq {} & \sum_{i = 0}^{i_0 - 1}
  f(\bar{X} - (i + 1) \sqrt{n}) \, \prob{X \in \mathcal{P}_i} + {} \\
  & f(0) \, \prob{X \in \mathcal{P}_{i_0}}\,,
\end{align*}
where the inequality holds because $f(x)$ is a decreasing function of $x$. Now fix $i > 0$. Then from the definition of $\mathcal{P}_i$ and Hoeffding's inequality, we have
\begin{align*}
  \prob{X \in \mathcal{P}_i}
  \leq \prob{X \leq \bar{X} - i \sqrt{n}}
  \leq \exp[-2 i^2]\,.
\end{align*}
Trivially, $\prob{X \in \mathcal{P}_0} \leq 1 = \exp[- 2 \cdot 0^2]$. Finally, we chain all inequalities and get our claim.
\end{proof}

\begin{lemma}
\label{lem:perturbation lower bound} Let $m = 2 a n$. Then for any $\delta \in [0, a n]$,
\begin{align*}
  \smashoperator[r]{\sum_{y = \ceils{a n + \delta}}^m} B(y; m, 1 / 2)
  \geq \frac{\sqrt{\pi}}{e^2 \sqrt{a}}
  \exp\left[- \frac{2 (\delta + \sqrt{n})^2}{a n}\right]\,.
\end{align*}
\end{lemma}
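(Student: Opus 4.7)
The plan is to bound the tail $\sum_{y=\ceils{an+\delta}}^{2an} B(y; 2an, 1/2)$ from below by restricting to a block of $\ceils{\sqrt{n}/2}$ consecutive binomial masses and then invoking a Stirling-based lower bound on a single mass. Concretely, since $y \mapsto B(y; 2an, 1/2)$ is nonincreasing on $y \geq an$, the tail is at least $N \cdot B(y^\star; 2an, 1/2)$, where I take $N = \ceils{\sqrt{n}/2}$ and $y^\star = \ceils{an+\delta} + N - 1$, so that the corresponding deviation $k^\star := y^\star - an$ satisfies $k^\star \leq \delta + \sqrt{n}$. The edge case $\delta + \sqrt{n} > an$, which could make the block overrun $2an$, I would handle separately by truncating and using a cruder single-term bound, since in that regime the target right-hand side is already vanishingly small.

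Next, I would derive a pointwise lower bound
\begin{align*}
B(an+k;\,2an,\,1/2) \,\geq\, \frac{2\sqrt{\pi}}{e^2\sqrt{an}} \exp\!\left(-an\,h\!\left(\tfrac{k}{an}\right)\right)
\end{align*}
for $k \in [0, an)$, where $h(r) = (1+r)\log(1+r) + (1-r)\log(1-r)$. This follows from applying the two-sided Stirling bounds $\sqrt{2\pi N}(N/e)^N \leq N! \leq e\sqrt{N}(N/e)^N$ to the three factorials of $\binom{2an}{an+k}$ (lower on the numerator, upper on both denominator factorials), bundling $2^{-2an}$ into the power term, and using $\sqrt{(an)^2 - k^2} \leq an$. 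The $e^2$ in the denominator tracks the two upper-bounded factorials, and $\sqrt{\pi}$ collects the Stirling prefactors.

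The main analytic ingredient is the elementary inequality $h(r) \leq 2r^2$ on $[0, 1)$. I would prove it by studying $g(r) = h(r) - 2r^2$: one has $g(0) = g'(0) = 0$ and $g''(r) = 2/(1-r^2) - 4$, which is negative on $[0, 1/\sqrt{2}]$, so $g$ initially dips below zero; past $1/\sqrt{2}$, a direct sign analysis of $g'(r) = \log((1+r)/(1-r)) - 4r$ (it stays negative until $r \approx 0.96$ and then turns positive) together with the boundary computation $g(1^-) = 2\log 2 - 2 < 0$ shows that $g$ remains non-positive throughout. Granted this, $\exp(-an \cdot h(k/(an))) \geq \exp(-2k^2/(an))$, so each block term is at least $\tfrac{2\sqrt{\pi}}{e^2\sqrt{an}}\exp(-2(k^\star)^2/(an))$.

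Assembling everything, the tail is at least $\ceils{\sqrt{n}/2} \cdot \tfrac{2\sqrt{\pi}}{e^2\sqrt{an}}\exp(-2(k^\star)^2/(an)) \geq \tfrac{\sqrt{\pi}}{e^2\sqrt{a}}\exp(-2(\delta+\sqrt{n})^2/(an))$, where the last step uses $\ceils{\sqrt{n}/2} \geq \sqrt{n}/2$ to absorb a factor of $2$ and $k^\star \leq \delta + \sqrt{n}$. I expect the hard parts to be (i) tracking the Stirling constants cleanly so that exactly the claimed prefactor $\sqrt{\pi}/(e^2\sqrt{a})$ emerges without loss, and (ii) the $h(r) \leq 2r^2$ inequality near $r = 1$, where the naive term-by-term power series comparison fails (it only gives $r \leq \sqrt{6/7}$) and one needs the more careful derivative-sign analysis just sketched.
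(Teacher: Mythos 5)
Your proposal is correct and follows essentially the same route as the paper's proof: a block of $\Theta(\sqrt{n})$ consecutive terms at the left edge of the tail, each bounded below by a Gaussian-type pointwise lower bound $B(an+k; 2an, 1/2) \geq \frac{2\sqrt{\pi}}{e^2\sqrt{an}}\exp[-2k^2/(an)]$ evaluated at deviation $k \leq \delta + \sqrt{n}$ (the paper imports this pointwise bound from Lemma~4 of \citet{kveton19garbage}, whereas you re-derive it from two-sided Stirling bounds plus $h(r) \leq 2r^2$). One small simplification: the inequality $h(r) \leq 2r^2$ does follow from the naive term-by-term comparison, since $r^{2k} \leq r^2$ gives $h(r) = \sum_{k \geq 1} \frac{r^{2k}}{k(2k-1)} \leq r^2 \sum_{k \geq 1} \frac{1}{k(2k-1)} = (2\log 2)\, r^2 < 2r^2$ on $[0,1]$, so the derivative-sign analysis near $r=1$ is unnecessary.
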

\begin{proof}
By Lemma 4 in Appendix of \citet{kveton19garbage},
\begin{align*}
  B(y; m, 1 / 2)
  \geq \frac{\sqrt{2 \pi}}{e^2} \sqrt{\frac{m}{y (m - y)}}
  \exp\left[- \frac{2 (y - a n)^2}{a n}\right]\,.
\end{align*}
Also note that
\begin{align*}
  \frac{y (m - y)}{m}
  \leq \frac{1}{m} \frac{m^2}{4}
  = \frac{a n}{2}
\end{align*}
for any $y \in [0, m]$. Now we combine the above two inequalities and get
\begin{align*}
  B(y; m, 1 / 2)
  \geq \frac{2 \sqrt{\pi}}{e^2 \sqrt{a n}}
  \exp\left[- \frac{2 (y - a n)^2}{a n}\right]\,.
\end{align*}
Finally, we note the following. First, the above lower bound decreases in $y$ for $y \geq an + \delta$, since $\delta \geq 0$. Second, by the pigeonhole principle, there are at least $\floors{\sqrt{n}}$ integers between $a n + \delta$ and $a n + \delta + \sqrt{n}$, starting with $\ceils{a n + \delta}$. This leads to the following lower bound
\begin{align*}
  \smashoperator[r]{\sum_{y = \ceils{a n + \delta}}^m} B(y; m, 1 / 2)
  & \geq \floors{\sqrt{n}} \frac{2 \sqrt{\pi}}{e^2 \sqrt{a n}}
  \exp\left[- \frac{2 (\delta + \sqrt{n})^2}{a n}\right] \\
  & \geq \frac{\sqrt{\pi}}{e^2 \sqrt{a}}
  \exp\left[- \frac{2 (\delta + \sqrt{n})^2}{a n}\right]\,.
\end{align*}
The last inequality is by $\floors{\sqrt{n}} / \sqrt{n} \geq 1 / 2$, which holds for $n \geq 1$. This concludes our proof.
\end{proof}

\bibliographystyle{named}
\bibliography{References}

\end{document}